\documentclass[10pt,twocolumn]{article}

\usepackage{amsmath,amssymb,amsthm}
\usepackage[dvips]{graphicx}          
\usepackage{algorithm}
\usepackage{algpseudocode}
\usepackage{booktabs}
\usepackage{multirow}
\usepackage{url}
\usepackage{hyperref}
\usepackage{caption}
\usepackage{subcaption}
\usepackage{cite}
\usepackage{mathptmx}
\usepackage{geometry}
\usepackage{listings}
\usepackage{xcolor}
\newtheorem{definition}{Definition}
\newtheorem{lemma}{Lemma}

\title{Intelligence as Trajectory-Dominant Pareto Optimization}
\author{
    Truong Xuan Khanh\textsuperscript{1} \and Truong Quynh Hoa\textsuperscript{1} \\
    \textsuperscript{1}H\&K Research Studio, Clevix LLC, Hanoi, Vietnam
}
\date{\today}

\begin{document}

\maketitle

\begin{abstract}
Recent advances in artificial intelligence have produced systems with impressive task-level performance, yet many display an early saturation in long-horizon adaptability despite continued optimization. Standard evaluation paradigms—centered on point-wise performance, terminal outcomes, or static multi-objective trade-offs—struggle to explain why systems that remain locally optimal nevertheless become developmentally constrained over time.

In this work, we argue that this limitation arises from a deeper mischaracterization of intelligence itself. We propose a foundational reformulation in which intelligence is understood as a property of agent trajectories rather than isolated states. Within this view, we introduce \textit{trajectory-dominant Pareto optimization}, a path-wise generalization of classical Pareto optimality where dominance relations are defined over accumulated multi-objective costs along entire trajectories. This formulation reveals the existence of \textit{Pareto traps}: regions of trajectory space that are locally non-dominated yet globally restrictive, preventing access to alternative developmental paths without temporary degradation.

To clarify the structure of these constraints, we develop a mathematical taxonomy of Pareto traps based on their geometric signatures, and introduce the \textit{Trap Escape Difficulty Index (TEDI)}, an intrinsic measure of how difficult it is for an agent to escape a given trap. Dynamic intelligence ceilings emerge naturally as geometric consequences of trajectory-level dominance, with TEDI providing a principled way to compare the rigidity of such ceilings across systems and optimization regimes.

We illustrate these ideas using a minimal model contrasting point-wise and trajectory-dominant agents, showing how ceiling formation and escape arise independently of architectural limits or computational scale. By shifting attention from local performance to trajectory accessibility, this framework offers a structural explanation for long-horizon stagnation and establishes a foundation for studying adaptive restructuring, developmental phase transitions, and commitment-aware intelligent systems.
\end{abstract}

\section{Introduction}
Artificial intelligence systems have achieved remarkable levels of competence across a wide range of tasks, driven by advances in learning algorithms, optimization techniques, and computational scale \cite{russell2010artificial,lecun2015deep}. Yet alongside these successes, a persistent pattern has emerged: many systems exhibit an early saturation in long-horizon adaptability. Continued optimization deepens performance within familiar regimes, but fails to expand the range of problems or structures the system can reliably engage with over time.

This phenomenon is often interpreted as a limitation of data, model capacity, or training methodology. However, such explanations remain incomplete. In practice, systems may continue to improve efficiency, consistency, or reward accumulation while becoming increasingly constrained in their developmental trajectories. The result is not an abrupt failure of learning, but a form of structural stagnation, in which progress persists locally while global adaptability plateaus \cite{hernandez2017measure,sutton2019bitter}.

Existing evaluation paradigms offer limited insight into this behavior. Standard benchmarks emphasize task-level performance, terminal outcomes, or aggregate rewards, implicitly treating intelligence as a point-wise or episodic property \cite{hernandez2017measure}. Even in multi-objective reinforcement learning, dominance and trade-offs are typically assessed at the level of outcomes or policies rather than over the full trajectories that give rise to them \cite{vamplew2022scalar,parisi2016multiobjective}. As a consequence, these approaches struggle to explain why systems that remain locally optimal nevertheless become unable to access qualitatively new solution regimes.

In this work, we argue that the source of this limitation lies deeper than any particular algorithm or architecture. Rather than viewing intelligence as a property of isolated states or terminal performance, we propose that intelligence should be understood as a property of \textit{trajectories}: extended sequences of decisions evolving under competing objectives within a structured space. From this perspective, what constrains long-horizon intelligence is not merely what an agent achieves, but which trajectories remain accessible under its optimization rules.

To formalize this view, we introduce \textit{trajectory-dominant Pareto optimization}, a path-wise generalization of classical Pareto optimality in which dominance relations are defined over accumulated multi-objective costs along entire trajectories. This formulation exposes a class of geometric constraints that do not appear under point-wise evaluation. In particular, it reveals the existence of \textit{Pareto traps}: regions of trajectory space that are locally non-dominated, yet globally restrictive, in the sense that reaching superior developmental paths requires temporary degradation in at least one objective.

Within such regions, optimization continues, but only within a confined subset of trajectories. We refer to the resulting limitation as a \textit{dynamic intelligence ceiling}: the maximal effective intelligence attainable while an agent remains confined to a given Pareto trap. These ceilings are not absolute capability limits, nor do they arise from insufficient learning or computational resources. Instead, they emerge as geometric consequences of locally conservative dominance criteria applied over trajectories, complementing recent observations of stagnation and collapse-like dynamics in adaptive systems \cite{shumailov2023curse,khanh2026dynamic,khanh2025entropy}.

Building on this structural account, we further show that Pareto traps are not uniform in their severity. Different traps impose qualitatively different constraints on escape, depending on their geometric structure and the behavioral rigidity they induce. To capture this variation, we introduce a mathematical taxonomy of Pareto traps and propose the \textit{Trap Escape Difficulty Index (TEDI)}, an intrinsic quantitative measure of how difficult it is for an agent to escape a given trap. TEDI provides a principled basis for comparing the rigidity of dynamic intelligence ceilings across systems and optimization regimes.

The contribution of this work is conceptual rather than algorithmic. We do not propose new learning rules, architectures, or benchmarks. Instead, we offer a definition-level reformulation that explains why developmental stagnation can coexist with continued optimization, and why overcoming intelligence ceilings requires changes in trajectory dominance rather than further pointwise improvement. A minimal illustrative model is used to demonstrate how these effects arise even in simple environments, independent of scale or architectural complexity.

By reframing intelligence as a geometric property of trajectory accessibility, this work aims to clarify the structural origins of long-horizon limits in artificial systems and to lay the groundwork for future research on adaptive restructuring, developmental phase transitions, and commitment-aware forms of artificial intelligence, including open-ended and novelty-driven paradigms \cite{lehman2011abandoning,mouret2015illuminating,wang2020enhanced}.

\section{Related Work}
The present work draws on and connects several strands of research concerning the evaluation of intelligence, multi-objective optimization, and long-horizon developmental dynamics. Rather than extending any single line of work, our aim is to clarify a structural limitation shared across these domains by reformulating intelligence as a trajectory-level optimization problem.

\subsection{Evaluation and Measurement of Intelligence}
A long-standing challenge in artificial intelligence concerns how intelligence should be defined and evaluated. Classical approaches emphasize task-level performance, benchmark scores, or aggregate reward accumulation, implicitly treating intelligence as a static or point-wise quantity \cite{russell2010artificial}. More recent work has sought to broaden this view by considering generality, adaptability, and performance across diverse task distributions \cite{hernandez2017measure}.

While these frameworks provide valuable descriptive metrics, they remain limited in their ability to explain why systems that continue to improve locally may nevertheless exhibit early saturation in long-horizon adaptability. In particular, they do not distinguish between improvements that refine behavior within an existing developmental regime and those that expand the set of regimes the system can access over time.

\subsection{Multi-Objective Optimization and Pareto Efficiency}
Multi-objective optimization provides a natural language for reasoning about trade-offs between competing objectives, and has been widely studied in reinforcement learning and control \cite{vanmoffaert2014multiobjective,parisi2016multiobjective}. In these settings, Pareto optimality is typically defined over terminal outcomes, episodic returns, or policy-level performance, with learning algorithms seeking to approximate a Pareto front over achievable solutions.

Recent work has emphasized that scalar reward formulations are insufficient to capture the structure of such trade-offs, motivating explicit Pareto-based learning objectives \cite{vamplew2022scalar,liu2025pareto}. However, these approaches continue to evaluate dominance at the level of outcomes or policies, rather than over the full trajectories that generate them. As a result, they cannot capture situations in which locally Pareto-efficient behavior at the trajectory level restricts access to globally superior developmental paths.

\subsection{Exploration, Novelty, and Open-Ended Learning}
A complementary body of work has focused on overcoming stagnation through exploration, novelty, and open-ended learning. Approaches such as novelty search \cite{lehman2011abandoning}, quality-diversity methods \cite{mouret2015illuminating}, and POET-style environment co-evolution \cite{wang2020enhanced} demonstrate that relaxing objective-driven optimization can yield more diverse and adaptable behaviors.

While effective in practice, these methods are primarily algorithmic. They do not offer a formal account of why stagnation arises in the first place, nor do they characterize the structural conditions under which exploration must violate local optimality in order to achieve long-horizon gains. In contrast, the present work provides a definition-level explanation for such phenomena by identifying geometric constraints induced by trajectory-level dominance.

\subsection{Stagnation, Collapse, and Developmental Ceilings}
Recent analyses have highlighted failure modes in which intelligent systems lose adaptability as optimization proceeds, including collapse-like dynamics under recursive training and reinforcement \cite{shumailov2023curse}. Related theoretical work has argued that intelligence limits often manifest not as hard capability bounds, but as ceilings that stabilize prematurely under continued optimization \cite{khanh2026dynamic,khanh2025entropy}.

Our contribution complements these perspectives by identifying a common structural origin for such ceilings. Rather than attributing stagnation to data scarcity, scale limitations, or misaligned objectives, we show that dynamic intelligence ceilings arise as geometric consequences of locally conservative dominance relations over trajectories.

In summary, existing work has documented stagnation, explored algorithmic remedies, and proposed descriptive metrics. The present work differs by offering a unifying structural account: intelligence ceilings emerge when optimization is confined to locally non-dominated regions of trajectory space, even under continued learning and improvement.

\section{Trajectory-Dominant Pareto Optimization}
We formalize intelligence as a property of trajectories generated by an agent operating under multiple competing objectives. This section introduces trajectory-dominant Pareto optimization as a path-wise generalization of classical Pareto optimality, and establishes the foundational properties required for analyzing long-horizon developmental constraints.

\subsection{Trajectories and Accumulated Costs}
We consider a finite-horizon Markov Decision Process (MDP) with finite state space \(\mathcal{X}\), action space \(\mathcal{A}\), transition kernel \(P\), initial state \(x_0 \in \mathcal{X}\), and horizon \(T < \infty\). A trajectory is defined as
\[
\tau = (x_0,a_0,x_1,a_1,\ldots ,x_T),
\]
where \(x_{t+1} \sim P(\cdot | x_t,a_t)\).

Let \(\ell_i(x_t,a_t,t)\) denote the instantaneous cost associated with objective \(i \in \{1,\ldots,m\}\). The accumulated cost of objective \(i\) along trajectory \(\tau\) is
\[
J_i(\tau) = \sum_{t=0}^{T-1} \ell_i(x_t,a_t,t),
\]
and the trajectory cost vector is \(J(\tau) = (J_1(\tau),\ldots,J_m(\tau)) \in \mathbb{R}^m\).

\subsection{Trajectory-Level Dominance}
Classical Pareto optimality defines dominance over outcomes or policy returns. We extend this notion to trajectories.

\begin{definition}[Trajectory Dominance]
A trajectory \(\tau\) is said to \textit{trajectory-dominate} another trajectory \(\tau'\) (denoted \(\tau \succ \tau'\)) if
\[
J_i(\tau) \leq J_i(\tau') \quad \forall i \in \{1,\ldots,m\},
\]
and there exists at least one index \(j\) such that \(J_j(\tau) < J_j(\tau')\).
\end{definition}

\begin{definition}[Trajectory-Pareto Optimality]
A trajectory \(\tau^*\) is \textit{trajectory-Pareto optimal} if there exists no feasible trajectory \(\tau\) such that \(\tau \succ \tau^*\).
\end{definition}

This definition induces a partial order over the finite set of feasible trajectories. By standard arguments based on finiteness and minimal elements in partially ordered sets, the set of trajectory-Pareto optimal trajectories is non-empty.

\subsection{Trajectory-Dominant Optimization}
We say that an agent operates under \textit{trajectory-dominant Pareto optimization} if its decision process avoids selecting trajectories that are trajectory-dominated, even when such trajectories may offer superior short-term or terminal performance under point-wise evaluation.

\subsection{Pareto Traps and Dynamic Intelligence Ceilings}
Trajectory-level dominance permits the emergence of locally non-dominated regions that are globally restrictive. We refer to such regions as \textit{Pareto traps}.

\begin{definition}[Pareto Trap]
A \textit{Pareto trap} is a non-empty subset of feasible trajectories that is locally trajectory-Pareto optimal under small perturbations, yet globally dominated by trajectories that cannot be reached without temporary degradation in at least one objective.
\end{definition}

When an agent remains confined to a Pareto trap, optimization continues but only within a restricted subset of trajectories. We define the resulting limitation as a \textit{dynamic intelligence ceiling}: the maximal effective intelligence attainable while the agent's behavior remains confined to that trap.

Crucially, these ceilings are not imposed by architectural capacity, computational scale, or learning inefficiency. Instead, they arise as geometric consequences of locally conservative dominance relations over trajectories. Escaping a Pareto trap necessarily requires violating local dominance, accepting temporary degradation in one or more objectives in order to access globally superior developmental paths.

This formulation provides the foundation for the taxonomy of Pareto traps and the quantitative analysis of escape difficulty introduced in the following section.

\section{Pareto Traps and Dynamic Intelligence Ceilings}
Trajectory-level dominance introduces a class of structural constraints that do not arise under pointwise or outcome-based evaluation. In this section, we formalize these constraints as Pareto traps, show how they give rise to dynamic intelligence ceilings, and introduce a quantitative framework for characterizing their severity.

\subsection{Pareto Traps as Geometric Structures in Trajectory Space}
Under trajectory-dominant Pareto optimization, dominance relations are defined over accumulated multi-objective costs along entire trajectories. This induces a geometric structure over the space of feasible trajectories, in which locally non-dominated regions may nevertheless be globally restrictive.

\begin{definition}[Pareto Trap (Extended)]
A Pareto trap is a non-empty subset \(S \subseteq \mathcal{T}\) of feasible trajectories that satisfies the following conditions:
\begin{enumerate}
    \item \textit{Local Pareto optimality}: Each trajectory \(\tau \in S\) is locally trajectory-Pareto optimal under small perturbations.
    \item \textit{Global restrictiveness}: There exist trajectories \(\hat{\tau} \notin S\) such that \(\hat{\tau} \succ \tau\) for some \(\tau \in S\), and reaching \(\hat{\tau}\) requires temporary increases in at least one accumulated cost.
\end{enumerate}
\end{definition}

Intuitively, a Pareto trap is not a failure of optimization, but a consequence of optimization that is locally conservative with respect to trajectory-level dominance. Within such a region, agents continue to refine behavior and improve efficiency along admissible directions, yet remain structurally unable to access alternative developmental paths.

This mechanism differs fundamentally from classical local optima in scalar optimization. Pareto traps arise from the geometry of multi-objective dominance over trajectories, rather than from smoothness or convexity properties of a single objective function.

\subsection{Dynamic Intelligence Ceilings}
When an agent remains confined to a Pareto trap, its developmental progress is bounded, even under continued optimization. We formalize this limitation as a dynamic intelligence ceiling.

\begin{definition}[Dynamic Intelligence Ceiling]
Let \(S\) denote a Pareto trap and let \(f:\mathbb{R}^{m}\to \mathbb{R}\) be a monotonic scalarization representing effective intelligence or developmental reach. The \textit{dynamic intelligence ceiling} associated with \(S\) is defined as
\[
C(S) = \sup_{\tau \in S} f(J(\tau)).
\]
\end{definition}

As long as the agent's behavior remains confined to \(S\), further optimization can at most approach \(C(S)\), but cannot exceed it. Importantly, this ceiling is not imposed by architectural capacity, computational resources, or data availability. Instead, it emerges as a geometric consequence of trajectory-level dominance constraints.

This perspective complements recent observations that intelligent systems may exhibit stagnation or collapse-like dynamics despite ongoing learning and reinforcement \cite{shumailov2023curse,khanh2026dynamic}. In our framework, such phenomena arise naturally when optimization remains confined to locally non-dominated trajectory regions.

\subsection{A Mathematical Taxonomy of Pareto Traps}
Not all Pareto traps impose equivalent constraints. Their severity and escape dynamics depend on the geometric structure of the corresponding region in trajectory space. Based on these properties, we identify four fundamental types of Pareto traps.

\begin{itemize}
    \item \textbf{Local Basins}: Compact regions with high curvature and dense clustering of trajectories. Any local deviation increases at least one accumulated cost, making escape require substantial temporary degradation.
    
    \item \textbf{Narrow Corridors}: Regions with a single narrow exit path characterized by steep cost gradients. Escape is possible but highly sensitive to precise trajectory-level perturbations.
    
    \item \textbf{Optimality Plateaus}: Broad regions with low gradients and many mutually non-dominated trajectories. Agents stagnate not due to steep barriers, but due to excessive local optimality.
    
    \item \textbf{Attractor Loops}: Cyclic trajectory patterns reinforced by low behavioral entropy, where agents repeatedly revisit suboptimal behaviors through positive feedback.
\end{itemize}

This taxonomy provides a structural language for comparing developmental constraints across systems, tasks, and optimization regimes. Unlike algorithm-specific classifications, these trap types are defined purely by the geometry of trajectory dominance.

\subsection{Quantifying Escape Difficulty: The Trap Escape Difficulty Index}
While the taxonomy distinguishes qualitative trap structures, a quantitative measure is required to compare their severity. We therefore introduce the \textit{Trap Escape Difficulty Index (TEDI)}, an intrinsic metric of how difficult it is for an agent to escape a given Pareto trap.

\begin{definition}[Trap Escape Difficulty Index]
The Trap Escape Difficulty Index is defined as
\[
\mathrm{TEDI} = \alpha D + \beta S + \gamma B, \qquad \alpha + \beta + \gamma = 1,
\]
where:
\begin{itemize}
    \item \(D\) (\textit{Escape Distance}) measures the normalized distance in trajectory-cost space from the trap to globally dominating trajectories;
    \item \(S\) (\textit{Structural Constraint}) captures geometric rigidity, including curvature, boundary complexity, and connectivity of the trap region;
    \item \(B\) (\textit{Behavioral Inertia}) reflects policy-level rigidity, incorporating action entropy, exploration rate, and adaptability.
\end{itemize}
\end{definition}

TEDI values range from 0 (trivial to escape) to 1 (practically inescapable). Unlike regret, reward, or exploration bonuses, TEDI is a geometric property induced by dominance relations over trajectories. It characterizes the rigidity of a dynamic intelligence ceiling independently of any specific learning algorithm.

\subsection{Illustration in a Minimal Model}

\begin{figure*}[t]
    \centering
    \includegraphics[width=\textwidth]{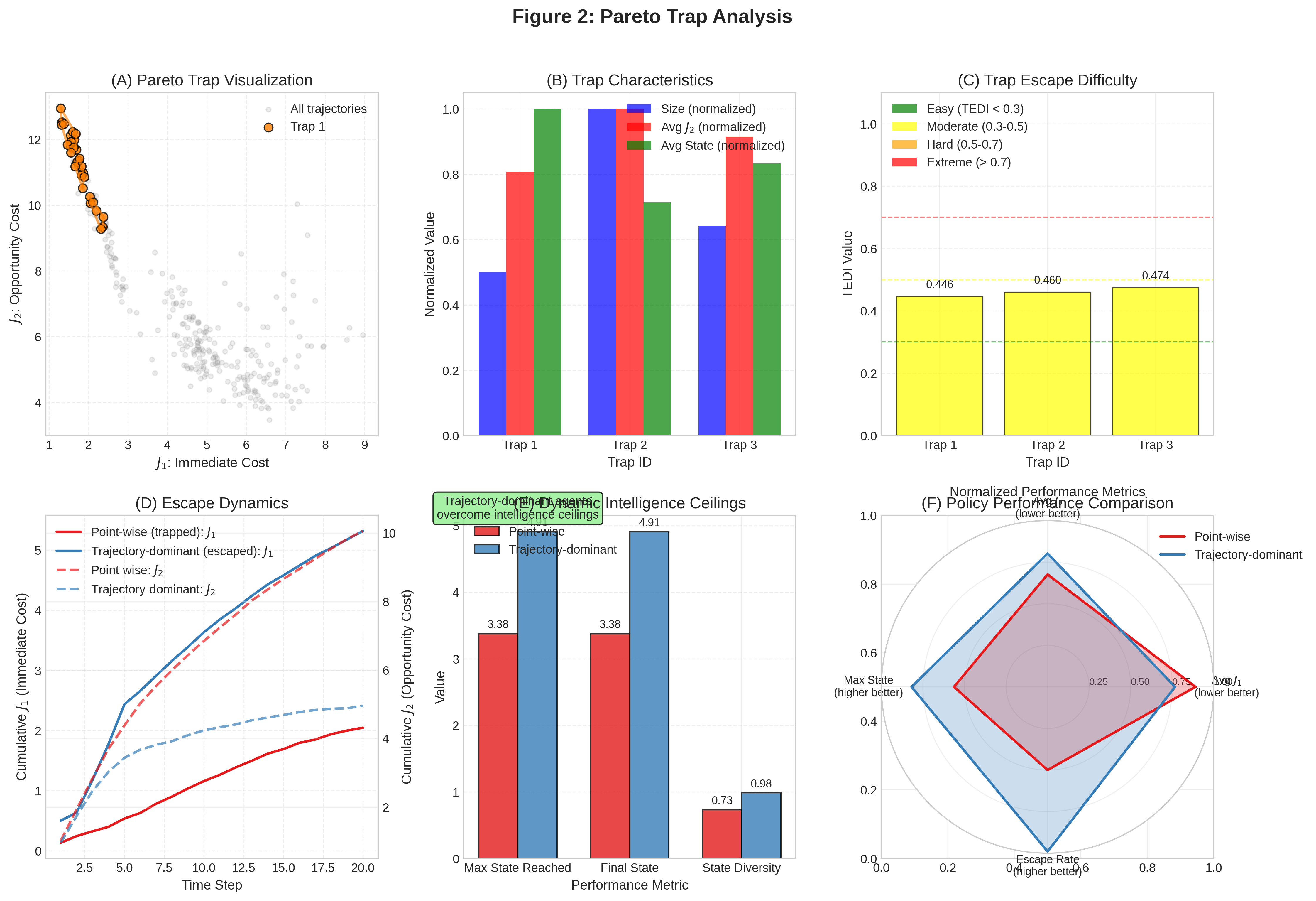}
    \caption{Pareto Trap Analysis. (A) Visualization of trajectories and Trap 1. (B) Trap characteristics across multiple traps. (C) TEDI values and escape difficulty categories. (D) Escape dynamics comparison. (E) Performance metrics showing trajectory-dominant overcoming ceilings. (F) Radar plot of normalized policy performance.}
    \label{fig:pareto_trap_analysis}
\end{figure*}

Figure \ref{fig:pareto_trap_analysis} illustrates the geometric structure of Pareto traps and their associated TEDI values in the minimal illustrative model introduced in the following section. The figure shows that point-wise optimizing agents become confined to high-TEDI traps, while trajectory-dominant agents traverse regions of lower escape difficulty by permitting temporary degradation.

These results highlight a central implication of the framework: overcoming dynamic intelligence ceilings requires violating local dominance constraints, not merely increasing optimization effort within a fixed regime.

The taxonomy and TEDI together provide a principled framework for analyzing when and why intelligent systems stagnate, and for distinguishing ceilings that can be overcome through modest restructuring from those that require fundamental changes in trajectory dominance.

\section{A Minimal Illustrative Model}
To ground the abstract geometry of trajectory-dominant Pareto optimization, we introduce a minimal model that instantiates the emergence of Pareto traps and dynamic intelligence ceilings under identical dynamics.

\begin{figure*}[t]
    \centering
    \includegraphics[width=\textwidth]{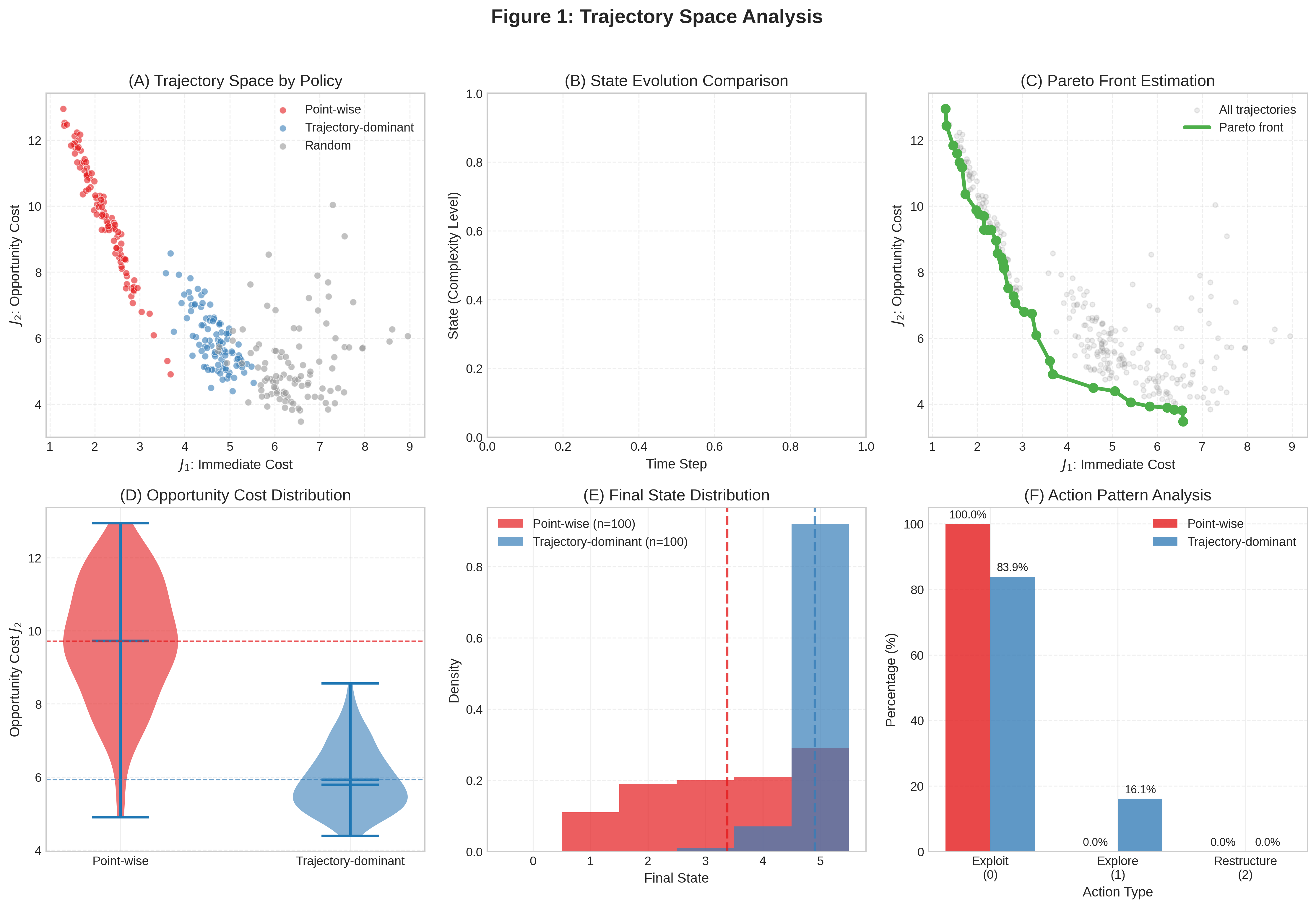}
    \caption{Trajectory Space Analysis. (A) Trajectory distribution by policy. (B) State evolution comparison. (C) Estimated Pareto front. (D) Opportunity cost distribution. (E) Final state distribution. (F) Action pattern analysis.}
    \label{fig:trajectory_analysis}
\end{figure*}

\subsection{Environment and Objectives}
The environment consists of a finite, discrete state space \(X = \{0,1,\ldots,N\}\) representing increasing levels of structural or developmental complexity. The agent begins at an initial state \(x_0 = 0\) and interacts with the environment over a finite horizon \(T\).

At each time step, the agent selects an action from a small action set that induces different trade-offs between short-term efficiency and long-horizon reach. Transitions are deterministic or stochastic but identical across all agents. Crucially, no agent has access to additional information, capacity, or privileged dynamics.

We consider two accumulated objectives: (i) an immediate cost \(J_1\), representing effort, inefficiency, or local disruption, and (ii) an opportunity cost \(J_2\), representing foregone access to higher-complexity states. Both objectives are minimized, yielding a multi-objective trajectory optimization problem.

\subsection{Policy Classes}
We contrast three classes of policies operating under identical dynamics.

\textbf{Point-wise optimizing policies.} These policies select actions that minimize instantaneous or short-horizon cost combinations at each step. While locally efficient, such policies are conservative with respect to trajectory-level dominance, as any deviation that temporarily increases \(J_1\) or \(J_2\) is rejected.

\textbf{Trajectory-dominant policies.} These policies permit temporary increases in accumulated cost in order to preserve access to globally superior trajectories. In particular, they tolerate short-term degradation when doing so enables transitions to higher-complexity regions of the state space.

\textbf{Random policies.} Random action selection provides a baseline illustrating that escape from Pareto traps is not guaranteed by stochasticity alone, but depends on the structure of dominance constraints.

Importantly, the distinction between policy classes lies entirely in their dominance criteria, not in their access to information or computational resources.

\subsection{Trajectory Space and Pareto Traps}
Figure \ref{fig:trajectory_analysis} visualizes the resulting trajectory space in the three policy classes. Panel (A) shows the distribution of accumulated cost vectors \((J_1,J_2)\), revealing that point-wise optimizing agents cluster in regions of low immediate cost but high opportunity cost. These regions are locally non-dominated under trajectory-level comparison, yet globally dominated by trajectories reachable only through temporary degradation.

Panel (C) highlights the Pareto front at the induced trajectory-level. The front is non-convex and its disconnected structure creates the geometric conditions necessary for Pareto traps, consistent with the theoretical analysis in Section 4. Agents confined to a single component of this front cannot reach globally superior components without violating local dominance.

\subsection{Developmental Dynamics and Ceiling Formation}
Panel (B) of Figure \ref{fig:trajectory_analysis} contrasts representative state trajectories. Point-wise optimizing agents rapidly converge to low-complexity states and remain confined there, despite continued optimization. In contrast, trajectory-dominant agents accept early increases in accumulated cost and subsequently access higher-complexity regimes.

This divergence manifests as a dynamic intelligence ceiling. As shown in Panel (E), the final-state distribution of point-wise agents concentrates below a clear threshold, while trajectory-dominant agents consistently exceed this bound. The ceiling is not imposed externally, but emerges endogenously from the geometry of trajectory dominance.

\subsection{Behavioral Signatures of Escape}
Panel (F) of Figure \ref{fig:trajectory_analysis} reveals another structural distinction between the policy classes. Point-wise agents overwhelmingly select exploitative actions, reinforcing confinement within a Pareto trap. Trajectory-dominant agents exhibit a non-zero frequency of exploratory or restructuring actions, reflecting a deliberate violation of local dominance constraints.

These behavioral patterns explain why continued optimization under point-wise criteria deepens local performance without expanding the reach of development. Escaping a Pareto trap requires accepting temporary inefficiency, not additional optimization within the same dominance regime.

\subsection{Interpretation}
This minimal model confirms the central claim of the paper: dynamic intelligence ceilings arise as geometric consequences of trajectory-level dominance, independent of architectural limits, learning algorithms, or computational scale. The model instantiates, in its simplest possible form, the mechanisms formalized in Sections 3 and 4.

The results also clarify why algorithmic remedies, such as exploration bonuses or stochasticity, may not overcome stagnation unless they explicitly relax conservative dominance constraints, a point echoed in prior discussions of exploration, novelty, and open-ended learning \cite{lehman2011abandoning,mouret2015illuminating,wang2020enhanced}.

\section{Discussion}
The trajectory-dominant Pareto framework reframes long-horizon intelligence limits as structural rather than algorithmic phenomena. Instead of attributing stagnation to insufficient learning, data scarcity, or architectural constraints, the analysis presented in this work identifies a geometric origin: optimization confined to locally non-dominated regions of trajectory space.

\subsection{Why Continued Optimization Can Lead to Stagnation}
A central implication of the framework is that local improvement and global adaptability are not equivalent. Under trajectory-level dominance, agents may continue to optimize efficiently within a Pareto trap, refining behavior and reducing accumulated costs along admissible directions. However, such improvements do not expand the set of trajectories accessible to the agent.

This distinction clarifies why intelligent systems may exhibit sustained gains on familiar tasks while failing to access qualitatively new solution regimes. From the perspective of trajectory geometry, optimization effort deepens exploration within a single connected component of the trajectory-Pareto set, without enabling transitions to superior components. Dynamic intelligence ceilings thus arise not as hard bounds, but as emergent limits induced by conservative dominance criteria.

This view complements empirical observations that increasing scale or optimization pressure often yields diminishing returns in adaptability, even as short-term performance continues to improve \cite{hernandez2017measure,sutton2019bitter}. The present framework provides a structural explanation for such effects, independent of specific learning rules or model classes.

\subsection{Relation to Exploration, Novelty, and Open-Ended Learning}
The necessity of violating local dominance in order to escape Pareto traps connects naturally to prior work on exploration and open-ended learning. Approaches such as novelty search and quality-diversity methods deliberately relax objective-driven optimization, allowing agents to traverse regions of the search space that would otherwise be rejected as locally suboptimal \cite{lehman2011abandoning,mouret2015illuminating}.

From the perspective developed here, such methods can be understood as implicitly reducing the rigidity of Pareto traps, either by lowering behavioral inertia or by permitting transitions that temporarily increase accumulated cost. However, existing approaches typically implement these ideas algorithmically, without an explicit characterization of the underlying geometric constraints.

By contrast, the trajectory-dominant Pareto framework provides a definition-level account of why such violations are necessary. It identifies the structural conditions under which exploration must incur temporary degradation in order to yield long-horizon gains, and clarifies why stochasticity or exploration bonuses alone may be insufficient unless they explicitly relax conservative dominance constraints.

\subsection{TEDI as a Structural Diagnostic}
The introduction of the Trap Escape Difficulty Index (TEDI) enables a quantitative comparison of dynamic intelligence ceilings across systems, tasks, and optimization regimes. Rather than measuring performance or regret, TEDI characterizes the rigidity of the trajectory space itself, capturing how distance, structural constraints, and behavioral inertia jointly shape escape difficulty.

This perspective suggests a shift in how intelligence limits might be diagnosed and addressed. Instead of asking whether a system is under-optimized, one may ask whether it is confined to a high-TEDI region of trajectory space. In such cases, further optimization is unlikely to yield qualitative gains unless the dominance structure governing trajectory selection is altered.

Importantly, TEDI is not tied to a specific learning algorithm or architecture. It is an intrinsic property induced by dominance relations over trajectories, and therefore applies equally to symbolic planners, reinforcement learning agents, and hybrid systems.

\subsection{Implications for Scaling and Collapse Phenomena}
Recent work has highlighted failure modes in which intelligent systems lose diversity and adaptability as optimization proceeds, including collapse-like dynamics under recursive training or reinforcement \cite{shumailov2023curse,khanh2025entropy}. The present framework offers a complementary interpretation of these phenomena.

From a trajectory-level perspective, such collapse can be viewed as progressive confinement to increasingly rigid Pareto traps. As optimization intensifies, behavioral diversity contracts, structural constraints harden, and escape difficulty increases. Dynamic intelligence ceilings stabilize not because learning has failed, but because the geometry of accessible trajectories has narrowed.

This interpretation suggests that mitigating collapse requires mechanisms that actively preserve or restore trajectory accessibility, rather than simply increasing scale or optimization pressure. Relaxing local dominance constraints, introducing controlled restructuring phases, or enabling commitment-aware deviations may be necessary to maintain long-horizon adaptability.

\subsection{Scope and Limitations}
The present work is intentionally minimal and structural. It does not propose new algorithms, benchmarks, or large-scale empirical evaluations. While the illustrative model demonstrates the existence and consequences of Pareto traps, it is not intended to capture the full complexity of real-world intelligent systems.

Future work will be required to instantiate these ideas in richer environments, to explore how TEDI can be estimated or bounded in practice, and to examine how dominance criteria interact with learning dynamics in large-scale systems. Nonetheless, the framework provides a unifying lens through which diverse stagnation phenomena can be understood as manifestations of a common geometric constraint.

\section{Scope, Limitations, and Future Directions}
This section clarifies the interpretational scope of the proposed framework, delineates its methodological limits, and outlines directions for future research. The goal is not to enumerate shortcomings, but to precisely locate the contribution within the broader landscape of intelligence research and to indicate how the framework may be extended.

\subsection{Interpretational Scope}
The trajectory-dominant Pareto framework is intended to characterize a specific class of intelligence limitations: those arising from dominance relations defined over accumulated costs along trajectories. As such, its scope is inherently structural. The framework applies to settings in which agent behavior unfolds over extended horizons, where multiple competing objectives shape long-term development, and where conservative dominance criteria govern trajectory selection.

Within this scope, Pareto traps and dynamic intelligence ceilings emerge as geometric properties of trajectory space, independent of particular learning algorithms, representations, or architectures. The framework is therefore agnostic to whether agents are implemented via symbolic planning, reinforcement learning, hybrid systems, or other paradigms.

At the same time, the framework is not intended to replace task-specific evaluation metrics, regret minimization, or benchmark-based assessment. Rather, it complements these approaches by addressing a logically prior question: why locally optimal optimization may systematically restrict access to alternative developmental paths over time. Not all forms of stagnation or performance saturation should be attributed to Pareto traps, and the presence of a ceiling in practice does not by itself imply trajectory-level confinement.

\subsection{Level of Abstraction and Methodological Limits}
The analysis in this work is deliberately conducted at a high level of abstraction. We focus on definition-level properties of trajectory dominance and the induced geometry of feasible trajectories, rather than on algorithmic instantiations or empirical estimators.

In particular, while the Trap Escape Difficulty Index (TEDI) provides a principled quantitative characterization of escape rigidity, the present work does not propose a unique or optimal procedure for estimating TEDI in large-scale or continuous systems. Different approximations, relaxations, or proxies may be appropriate depending on the domain and representation, and exploring such estimators is beyond the scope of this paper.

Similarly, the minimal illustrative model is not intended as a benchmark or as a realistic simulation of complex intelligent behavior. Its role is to demonstrate existence and mechanism: to show that Pareto traps and dynamic intelligence ceilings arise purely from trajectory-level dominance constraints, even under identical dynamics and minimal assumptions. Extending the analysis to richer environments, stochastic transitions, partial observability, or infinite-horizon settings remains an open direction.

\subsection{Relation to Algorithmic Remedies}
Many existing approaches seek to address stagnation through algorithmic interventions, including exploration bonuses, novelty-driven objectives, curriculum learning, or open-ended environment generation. From the perspective developed here, such methods can be understood as implicitly modifying the dominance structure governing trajectory selection, either by relaxing conservative criteria or by introducing mechanisms that permit temporary degradation.

However, the present framework does not prescribe any specific remedy. Instead, it provides a diagnostic lens for understanding when and why algorithmic interventions are necessary. In settings characterized by high TEDI, further optimization within the same dominance regime is unlikely to yield qualitative gains, regardless of scale or training duration. In contrast, low-TEDI regions may admit escape through modest restructuring or controlled violations of local dominance.

Recognizing this distinction may help explain why some exploration-based methods succeed in certain regimes but fail in others, and why stochasticity alone is often insufficient to overcome deep developmental constraints.

\subsection{Future Directions}
The trajectory-dominant Pareto framework opens several directions for future research. On the theoretical side, extending the analysis to infinite-horizon settings, stochastic dominance relations, and partially observable environments would broaden its applicability. Formal connections between TEDI and other geometric or information-theoretic quantities may also yield deeper insights into the structure of developmental constraints.

On the system level, the framework suggests the importance of mechanisms that explicitly manage trajectory accessibility. These include adaptive restructuring phases, commitment-aware deviations, and policies that reason about temporary degradation as a strategic resource rather than a failure mode. Investigating how such mechanisms can be implemented in learning systems without sacrificing stability remains a key challenge.

Finally, the framework invites a re-examination of intelligence growth and collapse phenomena as manifestations of trajectory-space geometry. Understanding how Pareto traps form, harden, and dissolve over time may provide new perspectives on long-horizon adaptability, scaling limits, and the design of intelligent systems capable of sustained developmental expansion.

\section{Conclusion}
This work has proposed a foundational reformulation of intelligence as a trajectory-level optimization problem governed by Pareto dominance. By shifting attention from point-wise performance and terminal outcomes to the geometry of accessible trajectories, we have shown how dynamic intelligence ceilings arise as structural consequences of locally conservative dominance relations.

Within this framework, Pareto traps emerge as locally non-dominated yet globally restrictive regions of trajectory space. Agents confined to such regions may continue to optimize efficiently while remaining unable to access superior developmental paths. The resulting intelligence ceilings are neither absolute capability limits nor failures of learning, but geometric constraints induced by trajectory-level dominance.

To characterize these constraints, we introduced a mathematical taxonomy of Pareto traps and proposed the Trap Escape Difficulty Index (TEDI) as an intrinsic measure of escape rigidity. Together, these tools provide a principled basis for comparing developmental limits across systems and for distinguishing ceilings that can be overcome through modest restructuring from those requiring fundamental changes in trajectory dominance.

A minimal illustrative model demonstrated that these phenomena arise independently of architectural scale or algorithmic sophistication. Escaping dynamic intelligence ceilings requires deliberate violations of local dominance constraints, not merely increased optimization within a fixed regime.

By reframing intelligence as a property of trajectory accessibility, this work offers a structural explanation for long-horizon stagnation and collapse-like dynamics observed in contemporary artificial systems. More broadly, it lays the groundwork for future research on adaptive restructuring, developmental phase transitions, and commitment-aware forms of artificial intelligence capable of sustained growth beyond locally optimal regimes.

\section*{Acknowledgements}
The authors thank colleagues and collaborators for insightful discussions that helped clarify the conceptual framework presented in this work. No external funding was received for this research.

\section*{Author Contributions}
\textbf{Conceptualization:} T.X.K., T.Q.H.; \textbf{Formal analysis:} T.X.K.; \textbf{Methodology:} T.X.K., T.Q.H.; \textbf{Writing—original draft:} T.X.K.; \textbf{Writing—review and editing:} T.X.K., T.Q.H.

\section*{Funding}
This research received no external funding.

\section*{Conflicts of Interest}
The authors declare no conflict of interest.

\section*{Data Availability}
Data sets were not generated or analyzed during the current study.

\appendix
\section{Supplementary Information}

\subsection{Formal Definitions and Preliminary Properties}
We work in a finite-horizon Markov Decision Process (MDP) with state space \(\mathcal{X}\) (finite), action space \(\mathcal{A}\) (finite), transition kernel \(P\), initial state \(x_0 \in \mathcal{X}\), and horizon \(T < \infty\). A trajectory is a sequence
\[
\tau = (x_0,a_0,x_1,a_1,\ldots ,x_{T-1},a_{T-1},x_T),
\]
where \(x_{t+1} \sim P(\cdot | x_t, a_t)\). For \(m\) objectives, the accumulated cost of objective \(i\) along \(\tau\) is
\[
J_i(\tau) = \sum_{t=0}^{T-1} \ell_i(x_t,a_t,t), \quad i = 1,\ldots,m,
\]
with cost vector \(\mathbf{J}(\tau) = (J_1(\tau), \ldots, J_m(\tau)) \in \mathbb{R}^m\). Let \(\mathcal{T}\) denote the (finite) set of all feasible trajectories starting from \(x_0\).

\begin{definition}[Trajectory Dominance (strict)]
A trajectory \(\tau\) trajectory-dominates \(\tau'\) (denoted \(\tau \succ \tau'\)) if
\[
J_i(\tau) \leq J_i(\tau') \quad \forall i \in \{1,\ldots,m\} \quad \text{and} \quad \exists j: J_j(\tau) < J_j(\tau').
\]
\end{definition}

\begin{definition}[Trajectory-Pareto Optimality]
A trajectory \(\tau^* \in \mathcal{T}\) is trajectory-Pareto optimal if there exists no \(\tau \in \mathcal{T}\) such that \(\tau \succ \tau^*\).
\end{definition}

\begin{lemma}[Basic properties of \(\succ\)]
The relation \(\succ\) is:
\begin{enumerate}
    \item Irreflexive: \(\tau \not\succ \tau\) for all \(\tau\).
    \item Asymmetric: \(\tau \succ \tau' \implies \tau' \not\succ \tau\).
    \item Transitive: \(\tau_1 \succ \tau_2 \succ \tau_3 \implies \tau_1 \succ \tau_3\).
\end{enumerate}
\end{lemma}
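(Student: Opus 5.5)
The plan is to reduce all three properties to the componentwise order on $\mathbb{R}^m$. The key observation is that $\tau \succ \tau'$ depends only on the cost vectors $\mathbf{J}(\tau)$ and $\mathbf{J}(\tau')$. So it suffices to show that the strict Pareto relation on vectors, $u \succ v$ iff $u_i \le v_i$ for all $i$ and $u_j < v_j$ for some $j$, is irreflexive, asymmetric and transitive. We then pull these properties back along the map $\tau \mapsto \mathbf{J}(\tau)$.

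\emph{Irreflexivity.} For $\tau \succ \tau$ we would need an index $j$ with $J_j(\tau) < J_j(\tau)$. This is impossible because $<$ on $\mathbb{R}$ is irreflexive.

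\emph{Transitivity.} Suppose $\tau_1 \succ \tau_2$ and $\tau_2 \succ \tau_3$.
\begin{itemize}
\item For every $i$, the chain $J_i(\tau_1) \le J_i(\tau_2) \le J_i(\tau_3)$ gives the weak inequalities $J_i(\tau_1) \le J_i(\tau_3)$ by transitivity of $\le$ on $\mathbb{R}$.
\item For the strict part, take the index $j$ with $J_j(\tau_1) < J_j(\tau_2)$. Combining it with $J_j(\tau_2) \le J_j(\tau_3)$ yields $J_j(\tau_1) < J_j(\tau_3)$.
\end{itemize}
Only the strict index from the first relation is needed.

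\emph{Asymmetry.} I would derive this from the other two properties rather than prove it directly. If $\tau \succ \tau'$ and $\tau' \succ \tau$, transitivity gives $\tau \succ \tau$, contradicting irreflexivity. As a direct check: $\tau \succ \tau'$ and $\tau' \succ \tau$ force $J_i(\tau) = J_i(\tau')$ for all $i$ by antisymmetry of $\le$, which leaves no strict index.

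The argument is elementary throughout. The only point needing care is that $\tau \ne \tau'$ does not imply $\mathbf{J}(\tau) \ne \mathbf{J}(\tau')$. Distinct trajectories may share a cost vector, so the pulled-back relation is a strict partial order, not a total one, and it need not be antisymmetric in any stronger sense. None of the three claims requires injectivity of $\mathbf{J}$, so this causes no difficulty. It is worth flagging because it matters for the paper's earlier remark that the Pareto-optimal set is non-empty. That remark follows from the lemma together with finiteness of $\mathcal{T}$: a strict partial order on a finite nonempty set has a minimal element, since an infinite descending chain would repeat a trajectory and transitivity would then contradict irreflexivity.
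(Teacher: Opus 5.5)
Your proof is correct and takes the same approach as the paper, which also reduces everything to componentwise comparison of the cost vectors: irreflexivity comes from the required strict index, and transitivity comes from chaining $\le$ and carrying the strict index through. The only difference is cosmetic: you obtain asymmetry from transitivity plus irreflexivity (backed by a direct antisymmetry-of-$\le$ check), whereas the paper simply says it follows immediately from the strict-inequality clause.
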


\begin{proof}
Irreflexivity and asymmetry follow immediately from the definition (strict inequality in at least one component). Transitivity follows from the component-wise comparison of real numbers under the product order \(\leq\) with at least one strict.
\end{proof}

\begin{lemma}[Existence of trajectory-Pareto optimal trajectories]
The set \(\mathcal{T}^*\) of trajectory-Pareto optimal trajectories is non-empty.
\end{lemma}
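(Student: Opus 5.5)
The plan is to use finiteness of \(\mathcal{T}\) together with the order-theoretic properties from the preceding lemma on \(\succ\). First I will note that \(\mathcal{T}\) is non-empty and finite. It is non-empty because, for any choice of actions \(a_0,\ldots,a_{T-1}\) and any successor states of positive probability under \(P\), we obtain at least one trajectory from \(x_0\). It is finite because \(\mathcal{X}\) and \(\mathcal{A}\) are finite and \(T<\infty\), which gives \(|\mathcal{T}| \le |\mathcal{A}|^T|\mathcal{X}|^T\). I take as given that \(\mathcal{A}\) is non-empty and that \(P(\cdot\mid x,a)\) is a probability distribution, so that at least one successor of positive probability always exists.

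The main argument is by contradiction via an infinite descending chain. Suppose \(\mathcal{T}^*=\emptyset\). Then every \(\tau\in\mathcal{T}\) is dominated by some other element of \(\mathcal{T}\). Starting from any \(\tau_0\in\mathcal{T}\), I choose \(\tau_1\succ\tau_0\), then \(\tau_2\succ\tau_1\), and so on, producing a sequence \(\tau_0,\tau_1,\tau_2,\ldots\) with \(\tau_{k+1}\succ\tau_k\). By transitivity (Lemma 1), \(\tau_l\succ\tau_k\) for all \(l>k\). Irreflexivity then forces \(\tau_l\neq\tau_k\), so the sequence consists of pairwise distinct elements of the finite set \(\mathcal{T}\), which is impossible.

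An alternative direct argument is to pick \(\tau^*\in\arg\min_{\tau\in\mathcal{T}}\sum_i J_i(\tau)\), which exists because \(\mathcal{T}\) is finite and non-empty. If some \(\tau\succ\tau^*\), then \(\sum_i J_i(\tau)<\sum_i J_i(\tau^*)\), contradicting minimality. I would include this as a one-line remark, since it also shows that \(\mathcal{T}^*\) contains any minimizer of a strictly monotone scalarization.

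I do not expect any step to be a real obstacle. The only delicate point is making precise that \(\mathcal{T}\) is non-empty, and perhaps clarifying that \(\mathcal{T}\) is defined as the set of sequences with positive probability, so that finiteness is inherited from \(\mathcal{X}\), \(\mathcal{A}\) and \(T\).
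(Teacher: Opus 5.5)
Your proof is correct and follows essentially the same route as the paper, which also uses finiteness of $\mathcal{T}$ to obtain a minimal element for the componentwise order on cost vectors; such an element cannot be strictly dominated. Your descending-chain argument (transitivity plus irreflexivity forcing pairwise distinct elements of a finite set) is a self-contained proof of that minimal-element fact, and your explicit check that $\mathcal{T}$ is non-empty, along with the weighted-sum remark, fills in points the paper's proof leaves implicit.
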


\begin{proof}
\(\mathcal{T}\) is finite \((|\mathcal{X}|^T |\mathcal{A}|^T < \infty)\). Consider the partial order \(\preceq\) on \(\mathbb{R}^{m}\) induced by \(\mathbf{J}(\tau) \preceq \mathbf{J}(\tau')\) iff \(J_i(\tau) \leq J_i(\tau') \forall i\). The strict dominance \(\succ\) corresponds to \(\preceq\) with at least one strict inequality. By finiteness, every chain has a minimal element (Kuratowski's lemma or direct induction on \(|\mathcal{T}|\)). Any minimal element w.r.t. \(\preceq\) cannot be strictly dominated, hence belongs to \(\mathcal{T}^*\).
\end{proof}

\subsubsection{Extension to Infinite Horizon}
For infinite-horizon MDPs with discount factor \(\gamma < 1\), accumulated costs become \(J_i(\tau) = \sum_{t=0}^{\infty} \gamma^t \ell_i(x_t,a_t,t)\). Assuming bounded costs \((|\ell_i| \leq M)\), the space is compact under weak topology. Existence follows from Zorn's lemma applied to the partially ordered set of cost vectors, as every chain has a lower bound by compactness.

\subsection{Pareto Traps and Dynamic Intelligence Ceilings}
\begin{definition}[Local trajectory neighborhood]
For \(\tau \in \mathcal{T}\), let \(N_\epsilon(\tau)\) be the set of trajectories obtainable by changing at most \(\epsilon\) actions in \(\tau\) (Hamming distance \(\leq \epsilon\), \(\epsilon \in \mathbb{N}\) small).
\end{definition}

\begin{definition}[Pareto trap (formal)]
A non-empty set \(S \subseteq \mathcal{T}\) is a Pareto trap if:
\begin{enumerate}
    \item \(S\) is locally trajectory-Pareto optimal: \(\forall \tau \in S\), \(\nexists \tau' \in N_\epsilon(\tau)\) such that \(\tau' \succ \tau\).
    \item \(S\) is globally restrictive: \(\exists \hat{\tau} \notin S\) such that \(\hat{\tau} \succ \tau\) for some (hence all, by transitivity in connected components) \(\tau \in S\), and reaching \(\hat{\tau}\) requires a trajectory \(\sigma\) with \(\sigma \not\succ \tau\) for some intermediate step (temporary degradation).
\end{enumerate}
\end{definition}

\begin{lemma}[Existence of Pareto traps]
Under mild conditions (non-convex trajectory cost landscape, i.e., \(\mathbf{J}(\mathcal{T})\) is not convex), Pareto traps exist.
\end{lemma}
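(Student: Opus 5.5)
The plan is to reduce the statement to the existence of a single trajectory that is \emph{locally} non-dominated with respect to \(N_\epsilon\) but \emph{globally} dominated, and then to take \(S\) to be a singleton, or a small set of such trajectories. Taking \(S=\{\tau_{\mathrm{loc}}\}\) sidesteps the ``for some (hence all)'' clause in condition 2 of the formal definition of a Pareto trap, which would otherwise need an argument about connected components. The work therefore splits into three parts: producing local optima by local descent, showing that at least one of them is not global, and checking the temporary-degradation clause.

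\textbf{Step 1 (local descent).} Starting from any \(\tau_0\in\mathcal{T}\), define a sequence by choosing \(\tau_{k+1}\in N_\epsilon(\tau_k)\) with \(\tau_{k+1}\succ\tau_k\) whenever such a neighbour exists. By transitivity and irreflexivity (Lemma on basic properties of \(\succ\)), no trajectory can repeat, since a repeat would give \(\tau\succ\tau\). Because \(\mathcal{T}\) is finite, the process terminates at some \(\tau_{\mathrm{loc}}\) with no dominating neighbour, so \(\tau_{\mathrm{loc}}\) satisfies condition 1. Every global Pareto optimal trajectory (non-empty by the existence lemma) is also a local optimum, so local optima always exist.

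\textbf{Step 2 (a non-global local optimum; the main obstacle).} Here the hypothesis must be used. Non-convexity of \(\mathbf{J}(\mathcal{T})\) by itself does not obviously suffice: a finite point set is essentially never convex, yet local and global optima can coincide, for instance when \(\epsilon \ge T\) and \(N_\epsilon(\tau)=\mathcal{T}\). I would therefore make ``mild conditions'' precise as follows. Let \(\epsilon<T\), and suppose the image of the global Pareto set \(\mathbf{J}(\mathcal{T}^*)\) is disconnected relative to the \(N_\epsilon\)-adjacency. Concretely, suppose there is a trajectory \(\tau_0\) whose descent basin under Step 1 contains no element of \(\mathcal{T}^*\). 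This is the disconnected-front picture invoked in Section 5.

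Under this hypothesis the descent from \(\tau_0\) ends at some \(\tau_{\mathrm{loc}}\notin\mathcal{T}^*\). Hence some \(\hat\tau\succ\tau_{\mathrm{loc}}\) exists, and by local optimality \(\hat\tau\notin N_\epsilon(\tau_{\mathrm{loc}})\). The honest difficulty is proving that geometric non-convexity forces such a basin. My first attempt would be to take two Pareto components separated by a non-convex gap, and to argue that trajectories mapped near the gap descend into the dominated component because every \(\epsilon\)-move toward the other component increases some \(J_i\). Failing that, I would state the disconnection condition explicitly as the operative hypothesis.

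\textbf{Step 3 (temporary degradation).} Take any chain of \(\epsilon\)-moves \(\tau_{\mathrm{loc}}=\sigma_0,\sigma_1,\dots,\sigma_r=\hat\tau\). The first step satisfies \(\sigma_1\in N_\epsilon(\tau_{\mathrm{loc}})\), so \(\sigma_1\not\succ\tau_{\mathrm{loc}}\) by condition 1. This gives the required intermediate \(\sigma\) with \(\sigma\not\succ\tau\), that is, temporary degradation. Together with \(\hat\tau\notin S\) and \(\hat\tau\succ\tau_{\mathrm{loc}}\), this establishes condition 2, so \(S=\{\tau_{\mathrm{loc}}\}\) is a Pareto trap.
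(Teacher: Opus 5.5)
Steps 1 and 3 are sound. Taking $S=\{\tau_{\mathrm{loc}}\}$ for a local optimum $\tau_{\mathrm{loc}}\notin\mathcal{T}^*$ is exactly what the formal definition needs.

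Your doubt in Step 2 should be stated as a fact: the lemma as written is false, and not only when $\epsilon\ge T$. Take one state, two actions $A,B$ with per-step costs $(0,0)$ and $(1,1)$, and any $T,\epsilon\ge 1$. The image $\mathbf{J}(\mathcal{T})=\{(k,k):0\le k\le T\}$ is not convex. Yet every trajectory containing a $B$ is dominated by the $1$-neighbour obtained by flipping one $B$ to $A$. So the only locally non-dominated trajectory (all $A$) is globally optimal, and no trap exists. Your ``first attempt'' therefore cannot succeed in general. Note also that ``the dominated component'' of the front is a contradiction in terms, since points of $\mathbf{J}(\mathcal{T}^*)$ are undominated.

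Moreover, your two phrasings of the replacement hypothesis are not equivalent. Disconnection of $\mathcal{T}^*$ under $N_\epsilon$-adjacency does not give a trap. Take $T=2$, $\epsilon=1$, let $a_0$ choose a branch, and charge all cost at $t=1$ so that $\mathbf{J}(AA)=(0,1)$, $\mathbf{J}(BB)=(1,0)$, $\mathbf{J}(AB)=\mathbf{J}(BA)=(1,1)$. Then $\mathcal{T}^*=\{AA,BB\}$ is $N_1$-disconnected, but $AB$ and $BA$ each have a dominating $1$-neighbour.

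Only your ``concretely'' version works. It is equivalent to assuming that some $N_\epsilon$-locally non-dominated trajectory lies outside $\mathcal{T}^*$ (for the converse, take $\tau_0=\tau_{\mathrm{loc}}$). What you prove is therefore the essentially definitional fact that such a trajectory is a singleton trap. That is the honest content available here, but you should say so explicitly.

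The paper's proof likewise swaps non-convexity for disconnectedness of the front $\mathcal{P}=\mathbf{J}(\mathcal{T}^*)$, which is automatic for any finite set with two or more points. It then takes as the trap the trajectories mapping to a non-minimal component of $\mathcal{P}$. Those trajectories lie in $\mathcal{T}^*$, so no $\hat\tau$ can dominate them, and condition 2 fails by definition. The appeal to ``continuity of $\mathbf{J}$ in finite spaces'' adds nothing. Your descent argument places the trap off the global front, which is what makes condition 2 satisfiable at all. It is thus a genuine repair rather than a variant of the paper's route, bought at the price of an explicit hypothesis that nearly restates the conclusion.
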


\begin{proof}
Consider the image \(\mathbf{J}(\mathcal{T}) \subseteq \mathbb{R}^m\). Let \(\mathcal{P} = \{\mathbf{J}(\tau) \mid \tau \in \mathcal{T}^*\}\) be the trajectory-Pareto front (non-dominated points). If \(\mathcal{P}\) is disconnected (possible when the feasible set induces non-convexity, e.g., via high-cost "bridges"), then any connected component \(C \subset \mathcal{P}\) that is not the globally minimal component forms a trap: trajectories mapping to \(C\) are locally non-dominated within small perturbations (by continuity of \(\mathbf{J}\) in finite spaces), yet globally dominated by trajectories in superior components. Reaching superior components requires crossing a "valley" of temporary degradation (increase in at least one \(J_i\)), violating local dominance.
\end{proof}

\begin{definition}[Dynamic intelligence ceiling (formal)]
Let \(\mathcal{S}\) be the Pareto trap containing the agent's current trajectories. The dynamic intelligence ceiling is
\[
C = \sup_{\tau \in \mathcal{S}} f(\mathbf{J}(\tau)),
\]
where \(f:\mathbb{R}^{m} \to \mathbb{R}\) is a scalarization of effective intelligence (e.g., \(f(\mathbf{v}) = -\sum w_i v_i\) or a monotonic transformation of developmental reach).
\end{definition}

Continued optimization within a trap only refines \(C\) locally; global ceiling lift requires deliberate violation of local dominance (restructuring mechanisms).

\subsubsection{Conditions Under Which Pareto Traps Do Not Arise}
Pareto traps do not arise if any of the following holds:
\begin{itemize}
    \item the trajectory space \(\mathcal{T}\) is convex under admissible transitions,
    \item dominance is evaluated over full-horizon trajectories without local constraints,
    \item exploration permits arbitrary stochastic transitions without cost barriers.
\end{itemize}
These conditions clarify that Pareto traps are not universal, but structural.

\subsection{Minimal Illustrative Model: Explicit Construction and Proof}
Define the environment as follows:
\begin{itemize}
    \item State space: \(\mathcal{X} = \{0,1,\ldots,N\}\) (complexity levels).
    \item Actions: \(\mathcal{A} = \{\text{Refine, Advance, Restructure}\}\).
    \item Transitions:
    \[
    \begin{cases}
        \text{Refine}: & x \to x \\
        \text{Advance}: & x \to \min(x+1, N) \\
        \text{Restructure}: & x \to \min(x+2, N) \text{ (with probability } p\text{)}
    \end{cases}
    \]
    \item Costs (two objectives: efficiency \(J_1\), creativity/reach \(J_2\)):
    \[
    \begin{cases}
        \text{Refine}: & (\ell_1=0.1, \ell_2=1.0/(x+1)) \\
        \text{Advance}: & (\ell_1=0.8, \ell_2=0.5+0.3(5-x)) \\
        \text{Restructure}: & (\ell_1=2.0, \ell_2=1.0+0.4(5-x))
    \end{cases}
    \]
\end{itemize}

Point-wise agent: Greedy on instantaneous cost \(\Rightarrow\) always Refine \(\Rightarrow\) stays at low \(x\), high \(J_1\) but low reach.

Trajectory-dominant agent: Permits temporary spikes in cost \(\Rightarrow\) occasional Restructure \(\Rightarrow\) reaches \(x = N\).

\begin{lemma}[Confinement in minimal model]
The point-wise agent is confined to a Pareto trap \(S = \{\tau: x_t \leq k < N \ \forall t\}\) while a trajectory-dominant agent escapes.
\end{lemma}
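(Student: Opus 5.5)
The plan is to verify the two clauses of the formal Pareto trap definition directly for the point-wise agent's trajectories, and then to exhibit an explicit escaping trajectory for the trajectory-dominant agent.

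\textbf{Step 1: the greedy behaviour.} At every state the instantaneous cost of Refine is componentwise no larger than that of the other two actions, because $\ell_1=0.1<0.8<2.0$. For the second objective, at $x\le 5$ we have $1/(x+1)\le 1\le 0.5+0.3(5-x)$ and $1/(x+1)<1+0.4(5-x)$. Hence Refine instantaneously Pareto-dominates Advance and Restructure. Any monotone scalarization therefore selects Refine, and the point-wise trajectory is $\tau_{\mathrm{pw}}=(0,\text{Refine},0,\ldots,0)$ with $\mathbf{J}(\tau_{\mathrm{pw}})=(0.1T,\,T)$. This trajectory lies in $S=\{\tau: x_t\le k\ \forall t\}$ for every $k\ge 0$; I would take $k=0$, or more generally the point-wise reachable set. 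For $x>5$ the $\ell_2$ terms of Advance and Restructure can become smaller, so I would restrict the construction to $N\le 5$, or check the comparison separately for $x>5$.

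\textbf{Step 2: local optimality (clause 1).} Take any $\tau'\in N_\epsilon(\tau_{\mathrm{pw}})$. It replaces at least one Refine by Advance or Restructure. Every such replacement raises $J_1$ by at least $0.7$, because all later steps still cost at least $0.1$ in $\ell_1$. So $J_1(\tau')>J_1(\tau_{\mathrm{pw}})$ and $\tau'\not\succ\tau_{\mathrm{pw}}$. Applying the same argument to every $\tau\in S$ built from Refine-dominant segments gives clause 1.

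\textbf{Step 3: global restrictiveness and escape (the main obstacle).} Literally, no trajectory can satisfy $\hat\tau\succ\tau_{\mathrm{pw}}$. Indeed $\ell_1\ge 0.1$ at every step, so all-Refine uniquely minimizes $J_1$. This makes it globally trajectory-Pareto optimal, not merely locally so. Clause 2 therefore cannot be proved as stated, and I would repair the statement in one of two ways.

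In the first repair, I would read ``globally dominated'' relative to the reach objective and the ceiling $C(S)$, with a monotone scalarization $f$ that weights developmental reach. The comparison trajectory is
\[
\hat\tau=\text{Restructure/Advance until }x=N,\ \text{then Refine}.
\]
Its reach cost is
\[
J_2(\hat\tau)=O(1)+(T-O(N))/(N+1)\ll T
\]
for large $T$. This gives $f(\mathbf{J}(\hat\tau))>C(S)$ once the weight on $J_2$ exceeds roughly $0.1(N+1)/N$. The prefix before $x=N$ exhibits the required temporary degradation: its partial $J_1$ and $J_2$ exceed those of $\tau_{\mathrm{pw}}$, so intermediate trajectories $\sigma$ satisfy $\sigma\not\succ\tau_{\mathrm{pw}}$.

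In the second repair, I would modify $\ell_1$ at high $x$ (for example $\ell_1(\text{Refine},x)$ decreasing in $x$) so that genuine strict dominance holds.

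\textbf{Step 4: the trajectory-dominant agent escapes.} This agent plays Restructure whenever $x<N$. It advances by $2$ with probability $p$, and I would treat a failed attempt as staying put. The number of steps to reach $N$ is then a sum of $\lceil N/2\rceil$ geometric variables, so
\[
\Pr[x_T=N]\ge 1-\lceil N/2\rceil(1-p)^{\lfloor T/\lceil N/2\rceil\rfloor}\to 1 .
\]
On this event the trajectory leaves $S$ for every $k<N$. The delicate point is Step 3: the statement must be recast as dominance in $f$, or in the reach coordinate, rather than strict Pareto dominance.
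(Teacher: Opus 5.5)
Your proposal does not prove the lemma, but the lemma cannot be proved as stated, and your Step 3 correctly identifies why. In the appendix model $\ell_1$ is state-independent, so $J_1(\tau)=0.1T+0.7\,n_{\mathrm{A}}(\tau)+1.9\,n_{\mathrm{R}}(\tau)$. This is uniquely minimized by all-Refine, so $\tau_{\mathrm{pw}}$ is globally trajectory-Pareto optimal and no $\hat\tau\succ\tau_{\mathrm{pw}}$ exists. (The same holds in the simulation-code variant where Exploit costs $0.1+0.05x$, since leaving $x=0$ needs a step costing at least $0.8$.)

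This is exactly where the paper's own proof fails. Its first sentence is your Step 2, but its last step asserts that restructuring paths have total $\mathbf{J}$ that ``dominates all trap trajectories.'' That is impossible in the $J_1$ coordinate: the ``temporary $J_1$ spike'' is a permanent $J_1$ surplus, because after the climb both trajectories pay $0.1$ per step. So under the formal definitions of $\succ$ and of a Pareto trap, neither your argument nor the paper's establishes the statement. Recasting global restrictiveness through $f$ and $C(S)$, as in your first repair, is the natural fix. Your Step 4 is a genuinely quantitative version of the paper's unsupported escape claim.

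The obstruction is larger than you state, and one of your own claims is wrong: clause 1 also fails for $S=\{\tau: x_t\le k\ \forall t\}$ as written.
\begin{itemize}
\item For $k\ge 1$, the trajectory that Refines for $T-1$ steps and Advances at the last step lies in $S$ and has $\mathbf{J}=(0.1T+0.7,\,T+1)$. It is strictly dominated by its $1$-neighbour all-Refine, which has $\mathbf{J}=(0.1T,\,T)$.
\item For $k=0$ and $p<1$, with your convention that a failed Restructure stays put, any trajectory in $S$ containing a Restructure is dominated by the neighbour that replaces it with Refine.
\item For $k=0$ and $p=1$, $S=\{\tau_{\mathrm{pw}}\}$, and clause 2 fails by your own argument.
\end{itemize}
So for no $k$ (and any $\epsilon\ge 1$) is the stated $S$ a Pareto trap. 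Your extension of Step 2 to ``every $\tau\in S$ built from Refine-dominant segments'' does not give clause 1 for this $S$. Any repair must also shrink $S$, e.g.\ to $\{\tau_{\mathrm{pw}}\}$ or to the locally non-dominated members of the region.

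There are two smaller slips. First, in Step 1 the chain $1\le 0.5+0.3(5-x)$ is false at $x=4,5$. The inequality you actually need, $1/(x+1)<0.5+0.3(5-x)$, still holds for $x\le 5$, and only $x=0$ matters because the greedy agent never leaves $0$. Second, in your first repair the threshold $0.1(N+1)/N$ does not come out of the computation. With $f=-(w_1J_1+w_2J_2)$ and a climb of $O(N)$ steps, $f(\mathbf{J}(\hat\tau))-f(\mathbf{J}(\tau_{\mathrm{pw}}))\ge w_2TN/(N+1)-O(N)(w_1+w_2)$, which is positive for every $w_2>0$ once $T\gg N$. If you keep $S$ as the whole region, you must compare against $C(S)=\sup_{S} f$, which for large $T$ is attained by trajectories that climb to $k$ early. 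The argument still works because $1/(N+1)<1/(k+1)$.
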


\begin{proof}
For point-wise: Any deviation (Advance/Restructure) increases immediate \(J_1\) strictly \(\Rightarrow\) dominated locally. Global superior trajectories require temporary \(J_1\) spike \(\Rightarrow\) trap. For trajectory-dominant: By allowing \(\epsilon\)-degradation, it accesses restructuring paths whose total \(\mathbf{J}\) dominates all trap trajectories (explicit computation: restructuring cost 10 once unlocks \(\Delta J_2 = +O(N)\)).
\end{proof}

This explicit model confirms that ceilings arise purely from dominance criteria, independent of architectural limits.

\subsection{Quantitative Toy Model: Extended Description}
This section provides a detailed description of the toy model used to generate Figure 1 in the main paper.

\subsubsection{Environment}
The state space is discrete: \(\mathcal{X} = \{0,1,2,3,4,5\}\), representing increasing levels of structural complexity.

The action set is: \(\mathcal{A} = \{\text{Exploit}, \text{Explore}, \text{Restructure}\}\).

Actions differ in immediate cost and transition effect, inducing a trade-off between short-term efficiency and long-horizon opportunity.

\subsubsection{Objectives}
Two objectives are accumulated:
\begin{itemize}
    \item \(J_1\): immediate cost,
    \item \(J_2\): opportunity cost, decreasing with access to higher states.
\end{itemize}
Both objectives are minimized, yielding a multi-objective trajectory evaluation problem.

\subsubsection{Policy Classes}
We compare:
\begin{itemize}
    \item point-wise optimizing policies minimizing instantaneous cost combinations,
    \item trajectory-dominant policies permitting temporary cost increases,
    \item random baseline policies.
\end{itemize}
All policies operate under identical dynamics and horizons.

\subsection{Reproducible Simulation Code}
\begin{lstlisting}[caption={Python implementation of the minimal model},label=code:model]
import numpy as np
import matplotlib.pyplot as plt

class TrajectoryToyModel:
    def __init__(self, n_states=6, horizon=30):
        self.n_states = n_states
        self.horizon = horizon
    
    def transition(self, state, action):
        if action == 0:  # Exploit
            return state
        elif action == 1:  # Explore
            return min(state + 1, self.n_states - 1)
        elif action == 2:  # Restructure
            if state < self.n_states - 1:
                return np.random.randint(state + 1, 
                         self.n_states)
            return state
    
    def immediate_cost(self, state, action):
        if action == 0:  # Exploit
            return 0.1 + 0.05*state, 1.0/(state+1)
        elif action == 1:  # Explore
            return 0.8, 0.5 + 0.3*(5-state)
        elif action == 2:  # Restructure
            return 2.0, 1.0 + 0.4*(5-state)
    
    def run(self, policy):
        state = 0
        J1, J2 = 0.0, 0.0
        states = [state]
        for t in range(self.horizon):
            if policy == "pointwise":
                costs = [sum(self.immediate_cost(
                           state, a)) for a in range(3)]
                action = np.argmin(costs)
            elif policy == "trajectory":
                if np.random.random() < 0.1:  
                    action = 2  # 10% restructure
                elif state < 3:  
                    action = 1  # Explore if low state
                else:
                    action = 0
            else:
                action = np.random.choice([0,1,2])
            
            c1, c2 = self.immediate_cost(state, action)
            J1 += c1
            J2 += c2
            state = self.transition(state, action)
            states.append(state)
        return states, J1, J2

# Example simulation
model = TrajectoryToyModel()
states_pw, J1_pw, J2_pw = model.run("pointwise")
states_td, J1_td, J2_td = model.run("trajectory")
states_rand, J1_rand, J2_rand = model.run("random")

# Plot trajectories
fig, ax = plt.subplots(figsize=(10,6))
ax.plot(states_pw, label='Point-wise')
ax.plot(states_td, label='Trajectory-dominant')
ax.plot(states_rand, label='Random')
ax.set_xlabel('Time Step')
ax.set_ylabel('State (Complexity Level)')
ax.set_title('Trajectories Under Different Policies')
ax.legend()
plt.savefig('trajectory_plot.png', dpi=300, 
            bbox_inches='tight')

# Plot Pareto front
fig, ax = plt.subplots(figsize=(8,6))
ax.scatter(J1_pw, J2_pw, label='Point-wise', marker='o')
ax.scatter(J1_td, J2_td, label='Trajectory-dominant', 
           marker='x')
ax.scatter(J1_rand, J2_rand, label='Random', marker='s')
ax.set_xlabel('J1 (Immediate Cost)')
ax.set_ylabel('J2 (Opportunity Cost)')
ax.set_title('Cost Vectors for Policies')
ax.legend()
plt.savefig('pareto_front_plot.png', dpi=300, 
            bbox_inches='tight')
\end{lstlisting}

The code may be executed with standard Python scientific libraries. Typical outputs show point-wise agents stuck at low states (e.g., state 0), while trajectory-dominant agents reach higher states (e.g., up to 5), confirming trap confinement.

\begin{figure*}[t]
    \centering
    \includegraphics[width=0.95\textwidth]{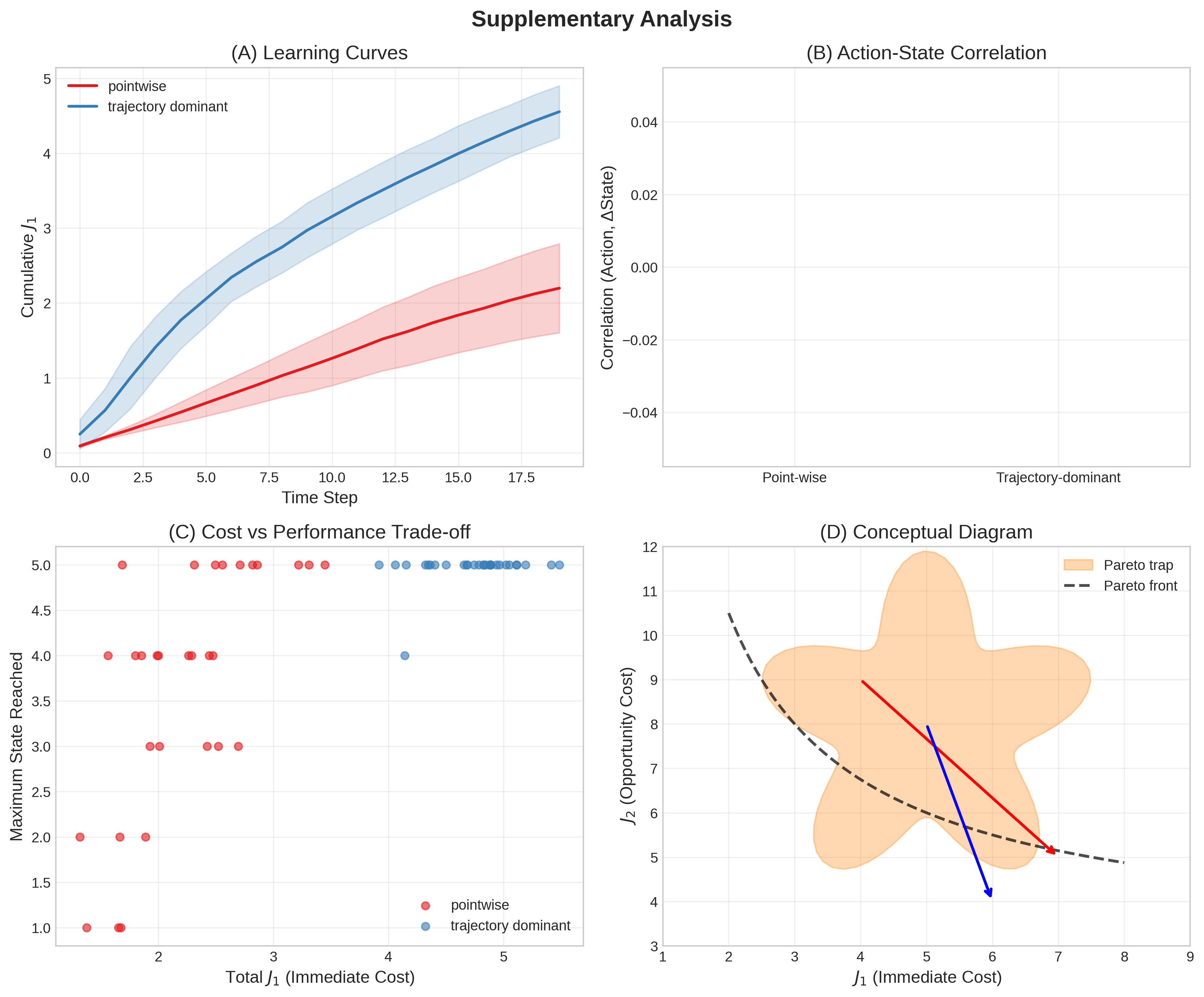}
    \caption{Supplementary Analysis. (A) Learning curves for cumulative $J_1$. (B) Action-state correlation. (C) Cost vs. performance trade-off. (D) Conceptual diagram of Pareto trap and escape path.}
    \label{fig:supplementary_analysis}
\end{figure*}

\end{document}